\DeclareMathOperator*{\argmax}{arg\,max}
\newcolumntype{C}[1]{>{\centering}m{#1}}
\newcommand{\EE}[1]{\mathbb{E}\left[#1\right]}
\newcommand{\Prob}[1]{\mathbb{P}\left\{#1\right\}}
\newcommand{\Regret}{\mathcal{R}}
\newcommand{\A}{\mathcal{A}_{C}}
\theoremstyle{plain}
\newtheorem{thm}{Theorem}
\newtheorem{lem}{Lemma}
\begin{document}

    \title{Learning and Fairness in Energy Harvesting:	\\A Maximin Multi-Armed Bandits Approach}
    \author{Debamita Ghosh, Arun Verma,  and Manjesh K. Hanawal \\
	Industrial Engineering and Operations Research\\
	Indian Institute of Technology Bombay, Mumbai, India - 400076\\
	\{debamita.ghosh, v.arun, mhanawal\}@iitb.ac.in
	}
    \maketitle

    \begin{abstract}
        Recent advances in wireless radio frequency (RF) energy harvesting allows sensor nodes to increase their lifespan by remotely charging their batteries. The amount of energy harvested by the nodes varies depending on their ambient environment, and proximity to the energy source, and lifespan of the sensor network depends on the minimum amount of energy a node can harvest in the network. It is thus important to learn the least amount of energy harvested by nodes so that the source can transmit on a frequency band that maximizes this amount. We model this learning problem as a novel stochastic \textit{Maximin Multi-Armed Bandits} (Maximin MAB) problem and propose an Upper Confidence Bound (UCB) based algorithm named \textit{Maximin UCB}. Maximin MAB is a generalization of standard MAB, and Maximin UCB enjoys the same performance guarantee as to the UCB1 algorithm. Our experimental results validate the performance guarantees of the proposed algorithm.
    \end{abstract}

    \begin{IEEEkeywords}
    		Multi-Armed Bandits, Upper Confidence Bound, Radio Frequency, Energy Harvesting, Fairness
    \end{IEEEkeywords}

    \bstctlcite{IEEEexample:BSTcontrol}

    \section{Introduction}
    \label{sec:introduction}

The recent advances of radio frequency energy harvesting (RFEH) networks have emerged as a feasible option for the next-generation wireless communication networks. Wireless systems can increase their lifespan and be environmentally friendly by extracting RF energy from natural ecosystems or by dedicated energy sources. In this work, we focus on RFHN, where a dedicated energy source transmits energy that wireless sensors nodes (WSN) can harvest to charge their batteries. This setup arises in many IoT systems where battery-powered sensors are deployed in remote environments, and an energy source can keep them active through wireless charging from a point where uninterrupted power supply is available.

Energy harvested at WSN depends on the circuits used, their performance, processing capabilities, and these may vary depending on the ambient conditions in which the sensors operate. Also, the amount of energy harvested depends on the frequency bands. The energy source can send energy on the entire band or divide it into sub-bands and concentrate all power on one of the sub-bands, which can improve the RF potential of the bands \cite{AWPL2014_li2014frequency,CM2015_mishra_smart}. The energy source has to decide then on which frequency sub-band to transmit energy so that the amount of energy harvested by the WSN is maximized. However, the amount of energy harvested on each of the sub-bands could be unknown and has to be first learned by the source. Henceforth, we refer to the frequency sub-bands as channels.
 
In sensors networks, all the nodes must be kept alive so that all of them can transmit information. However, the nodes may be at different locations, and the amount of energy harvested may be different. Then the energy source has to be `fair' in selecting a channel for energy transmission so that all the nodes can harvest energy. While one can look at many fairness criteria, one possibility is that the source transmits on the maximin optimal channel where the smallest energy harvested by a node is maximized, i.e., the source selects a channel based on maximin framework. Following this criterion, the source can ensure that all the nodes are active as long as possible.


Due to the stochastic nature of the wireless channels, the amount of energy harvested by nodes on each channel could be random. Further, the energy source may not know a priori the distribution of the amount of energy harvested by the nodes on a channel, and hence the source is faced with decision making in an uncertain environment. We set up our problem as a Multi-Armed Bandits (MAB) problem, where we refer the energy source as a learner and the channels as arms. We model our problem as Maximin MAB, where the goal is to identify a frequency band (channel) that is maximin optimal. In contrast to the classical MAB set up \cite{Book19_Tor19Bandits}, which identifies an arm (channel) with the highest average reward, in Maximin MAB set up, the goal is to identify the channel that maximizes the minimum average rewards by any node. As we will see later, our structure generalizes the standard MAB setup for vector-valued rewards and, as a special case, includes the standard MAB when there is only one node in the network. Specifically, our contributions can be summarized as follows:
\begin{itemize}
 	\item In Section \ref{sec:problem_setting}, we introduce the Maximin MAB set up to study the aspects of learning and fairness in RFHN with a dedicated energy source.
 	\item We propose an Upper Confidence Bound (UCB) based algorithm named as \ref{alg:MaxMin-UCB} for the new setup and provide its performance guarantee in Section \ref{sec:algorithm}.
 	\item We empirically validate the performance of \ref{alg:MaxMin-UCB} on the synthetic problem instances in Section \ref{sec:experiment}.  
\end{itemize}

\noindent
\textbf{Related Work:}
Various aspects of RFEH networks are studied in the literature. Here we discuss the learning and fairness related issues, which is the focus of this work.
For a detailed survey on RFEH, we refer to \cite{JSAC2015_energyharvesting_UlukusSennurYener, CST2014_wirelessnetwork_LuXiaoPingDusit, MM2014_harvesting_ValentaChristGreg}.
The authors of \cite{TIT2017_online_sakulkar} study the problem of rate maximization in EH communication with unknown channel states and develop a learning policy to maximize the rate achieved by the transmitter, modeling it as Markov Decision Process (MDP). A policy using the Bayesian MAB algorithm to select frequency bands based on their RF potential, particularly in the dynamic spectrum setting, is studied in \cite{WN2018_darak2018distributed}. The author in \cite{IA2017_maghsudi_distributed} proposes a multi-armed mean-field bandit based distributed approach for user association in an EH dense small scale network. In \cite{AA-MAS2012_tran_long}, authors consider the problem of energy management and data routing to maximize the information collected under the energy budget. For energy management, they use MAB based learning for allocated energy to sample, receive, and transmit. For network with multiple players, \cite{INFOCOM2019_DistributedLearning_TibrewalPatchalaHanawal}, \cite{WiOpt2019_DistributedAlgorithms_VermaHanawalVaze} use multi player MAB setting for distibuted learning.

The fairness issues in RFEH are considered in \cite{TWC2013_ju2013throughput, GCC2014_ju2014user,JSAC2015_yang2015throughput}. In networks where EH nodes receive energy from a source and transmit back information, unfair rate allocation occurs as nodes far from the source receive less energy, but has to use more energy for transmission (doubly-near-far problem). The authors in \cite{TWC2013_ju2013throughput, GCC2014_ju2014user} propose fairness constraints named common-throughput so that all the nodes achieve the same throughput. In \cite{JSAC2015_yang2015throughput}, the authors consider fair rate allocation in a massive MIMO RFEH network where a Hybrid Access Point (H-AP) wirelessly charges the nodes on the downlink and receives data from them on the uplink. The authors formulated a scheme that asymptotically maximizes the minimum rate among all the nodes. Our work differs from the existing literature as we consider both learning and fairness issues together.

    \section{Problem Setting}
    \label{sec:problem_setting}

We consider an energy source (referred simply as source) that wirelessly charges $p$ nodes. The source divides its available bandwidth for energy transmission into $m$ channels. We assume that the source transmits a fixed amount of power at any time on one of the channels. The amount of energy harvested by nodes on a channel is stochastic and depends on their location, distance from the source, and hardware capabilities. The energy harvesting setup with an energy source wirelessly charging $5$ nodes using $3$ channels is depicted in Fig. \ref{fig:Model}.
\begin{figure}[H]
	\centering
	\includegraphics[width=0.5\linewidth]{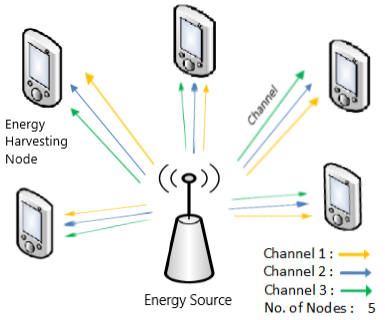}
	\caption{\small Energy Harvesting setup. Refer \cite{CST2014_wirelessnetwork_LuXiaoPingDusit} for a detailed architecture.}
	\label{fig:Model}
\end{figure}
We assume that the time is slotted, and in each slot, the source decides which channel to use for wireless charging. At the end of the time slot, the nodes inform the source of how much energy they could harvest in that round\footnote{Nodes can provide feedback by sending back the current state of their battery to the source.}. The source uses the feedback from the nodes to decide which channel to use in the next round. The goal of the source is to select a maximin optimal channel in which the minimum average energy harvested by any node is maximized.

We model the problem as Maximin MAB as follows: Let $m$ be the number of channels, $p$ be the number of nodes and $X^{(t)}_{ij}$ denote the energy harvested by node $j\in [p]$ on channel $i \in [m]$ in round $t$. For each channel-node pair $(i,j)$ where $i\in [m],j\in [p]$, the sequence $\{X^{(t)}_{ij}\}_{t\geq 1}$ is drawn independently and identically from an unknown distribution with mean $\mu_{ij}$. Further, these sequences are independent across $i$ and $j$. We assume that the distributions associated with each channel-node pair are sub-Gaussian with parameter $\sigma$ where $\sigma>0$. Thus a problem instance of the Maximin MAB is identified by a mean matrix given as follows:
\[
\begin{blockarray}{ccccc}
& \text{Node } 1 & \text{Node } 2 & \ldots & \text{Node }p \\
\begin{block}{c(cccc)}
  \text{Channel }1 & \mu_{11} & \mu_{12} & \ldots & \mu_{1p}\\
  \text{Channel } 2 & \mu_{21} & \mu_{22} & \ldots & \mu_{2p} \\
  \cdot & \cdot & \cdot & \ldots & \cdot \\
  \cdot & \cdot & \cdot & \ldots & \cdot \\
  \text{Channel } m & \mu_{m1} & \mu_{m2} & \ldots & \mu_{mp} \\
\end{block}
\end{blockarray}
 \]
\vspace{-4mm}

In our setup, the interaction between the source and the environment that governs the rewards for the channel-node pairs are as follows: In round $t$, the source selects the channel $I_t$, and receives the energy harvested by all the nodes on the channel $I_t$, i.e., the reward vector $\{X^{(t)}_{I_{t}1}, X^{(t)}_{I_{t}2},\ldots, X^{(t)}_{I_{t}p}\}$ as feedback. The goal of the source is to select a channel that maximizes the minimum average energy harvested by any node which is given as follows: 
\begin{equation}
    	i^{*} = \argmax\limits_{i \in [m]}(\min\limits_{j \in [p]} \mu_{ij}). \nonumber 
\end{equation}

A policy of the source consists of selecting a channel in each round based on past observations. Let $\pi$ denote a policy that selects the channel $I_t$ in round $t$. Then, we define the regret of Maximin MAB problem for $n$ rounds as follows:
\begin{equation}
	\label{equ:regret}
	\Regret_{n} = n\mu_{i^{*}} - \sum\limits_{t=1}^{n}\EE{\min\limits_{j \in [p]}\mu_{I_{t}j}}
\end{equation}
where $\mu_{i^{*}}= \min_{j \in [p]} \mu_{i^{*}j}$ and the expectation is with respect to the randomness in $I_t$.
We say that the policy is good if the regret is sub-linear, i.e., $\Regret_{n}/n \rightarrow 0$ as $n \rightarrow \infty$.  


    \section{Algorithm}
    \label{sec:algorithm}

We develop an algorithm named \ref{alg:MaxMin-UCB} that adapts the UCB1 algorithm \cite{ML2002_FiniteTimeAnlaysis_Auer} to our setting. The pseudo-code of the proposed algorithm is given in \ref{alg:MaxMin-UCB}. 

Recall that in our setup, selecting a channel gives feedback from all the nodes. Let $i \in [m]$, $j \in [p]$, and $T_i(t)$ be the number of times the channel $i$ is selected till time $t$. At the beginning of round $t$, the empirical mean energy harvested by node $j$ on channel $i$ is computed using $T_i(t-1)$ samples, and it is denoted by $\hat{\mu}_{ij}(t-1)$.


\ref{alg:MaxMin-UCB} works as follows: It takes $m, p, \sigma$, and $\delta$ as inputs, where $\sigma$ is the sub-Gaussian parameter, and $\delta$ is the confidence parameter. In the first $m$ rounds, each channel is selected in a round-robin fashion. In the subsequent round $t$, the UCB index is calculated for each channel $i$ denoted as UCB$_i(t, \delta)$. The channel with the highest value of UCB$_i(t, \delta)$ is selected, and corresponding estimates of $\hat{\mu}_{ij}$ are updated. 




\begin{algorithm}[H] 
	\renewcommand{\thealgorithm}{Maximin UCB}
	\floatname{algorithm}{}
	\caption{\bf }
	\label{alg:MaxMin-UCB}
	\begin{algorithmic}[1]
			\STATE \textbf{Input:} $m, p, \sigma, \delta$
			\STATE Select each channel once in first $m$ rounds
			\STATE Update $\hat{\mu}_{ij}$ for all $i\in [m], j\in [p]$
		\FOR{$t= m+1, m+2,\ldots,n$}
			\STATE For each channel $i\in [m]$ calculate 
				\begin{equation*}
 				    \text{UCB}_{i}(t, \delta)  \leftarrow \min\limits_{j \in [p]} \hat{\mu}_{ij}(t-1)+\sqrt{\frac{2\sigma^{2}\log(1/\delta)}{T_{i}(t-1)}} 	
				\end{equation*}
			\STATE Set $I_{t} \leftarrow \argmax\limits_{i \in [m]}\text{UCB}_{i}(t,\delta)$
			\STATE $\forall j \in [p]:$ Observe $X_{I_{t}j}$ and update the estimate of ${\mu}_{I_{t}j}$
		\ENDFOR
	\end{algorithmic}
\end{algorithm}




\subsection{Regret Analysis}
The regret for any policy $\pi$ can be decomposed as  $\Regret_{n} = \sum_{i \in [m]}\Delta_{i}\EE{T_{i}(n)}$, where $\Delta_{i} \doteq \mu_{i^{*}}- \min_{j \in [p]}\mu_{i j}$, called as the sub-optimality gap. Notice that when $p=1$, our Maximin MAB setup reduces to the standard MAB setup. Now we are ready to give theoretical guarantee for \ref{alg:MaxMin-UCB}. 
\begin{thm}
    \label{thm: Theorem 1}
    Let \ref{alg:MaxMin-UCB} runs for $n$ rounds and $\delta = {1}/{n^{2}}$. Then the regret of \ref{alg:MaxMin-UCB} on an instance $\mu:=\{\mu_{ij}\}_{i\in [m], j\in [p]}$ is upper bounded by
	\begin{equation*}
		\Regret_{n} \leq 3\sum_{i=1}^{m}\Delta_{i} + \sum_{i: \Delta_{i} > 0} \frac{16\sigma^{2}\log(n)}{\Delta_{i}}.
	\end{equation*}
\end{thm}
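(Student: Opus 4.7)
The plan is to follow the standard UCB1 regret analysis, starting from the decomposition $\Regret_n = \sum_{i\in[m]} \Delta_i \EE{T_i(n)}$ already noted, and reducing to a bound on $\EE{T_i(n)}$ for each suboptimal channel $i$ (i.e., $\Delta_i > 0$). The one structural observation required for the maximin setting is that $\bx \mapsto \min_{j\in[p]} x_j$ is $1$-Lipschitz in the $\ell_\infty$-norm, so $|\min_j \hat\mu_{ij}(t-1) - \min_j \mu_{ij}| \leq \max_j |\hat\mu_{ij}(t-1) - \mu_{ij}|$. Combined with the sub-Gaussian tail $\Prob{|\hat\mu_{ij,s} - \mu_{ij}| > \sqrt{2\sigma^2 \log(1/\delta)/s}} \leq 2\delta$ at a fixed sample count $s$ and a union bound over the $p$ nodes, this shows that $\min_j \hat\mu_{ij}(t-1)$ concentrates around $\min_j \mu_{ij}$ at exactly the rate that appears inside $\text{UCB}_i(t,\delta)$.

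The counting step then proceeds as in UCB1. I would define $u_i = \lceil 8\sigma^2\log(1/\delta)/\Delta_i^2\rceil$, chosen so that the confidence radius is at most $\Delta_i/2$ once $T_i(t-1)\geq u_i$, and write
\begin{equation*}
T_i(n) \leq u_i + \sum_{t=m+1}^n \one{I_t = i,\, T_i(t-1) \geq u_i}.
\end{equation*}
The selection rule $\{I_t=i\}\Rightarrow\{\text{UCB}_i(t,\delta)\geq \text{UCB}_{i^*}(t,\delta)\}$ lets me partition the indicator into two failure events: (A) $\text{UCB}_{i^*}(t,\delta)<\mu_{i^*}$, i.e.\ the optimal channel's UCB underestimates; and (B) $\text{UCB}_i(t,\delta)\geq\mu_{i^*}$ with $T_i(t-1)\geq u_i$, which forces $\min_j\hat\mu_{ij}(t-1)\geq \min_j\mu_{ij}+\Delta_i/2$ and hence, at the true minimizer $j^\dagger=\argmin_{j\in[p]}\mu_{ij}$, the one-sided deviation $\hat\mu_{ij^\dagger}(t-1)-\mu_{ij^\dagger}\geq\sqrt{2\sigma^2\log(1/\delta)/T_i(t-1)}$. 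Event (A), via the Lipschitz observation applied at the empirical minimizer on channel $i^*$, reduces symmetrically to a one-sided concentration failure on $i^*$.

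Taking expectations and union-bounding the two failure events over the $p$ nodes, the possible sample counts, and the rounds produces an $O(1)$ aggregate contribution once $\delta=1/n^2$ is substituted (so that $\log(1/\delta)=2\log n$); the deterministic part of the bound becomes $u_i \leq 16\sigma^2\log(n)/\Delta_i^2 + 1$, yielding $\EE{T_i(n)} \leq 16\sigma^2\log(n)/\Delta_i^2 + O(1)$. Multiplying by $\Delta_i$, summing over $i\in[m]$, and noting that $\Delta_{i^*}=0$ recovers the stated bound, with the first term $3\sum_{i=1}^m\Delta_i$ absorbing the ceiling together with the low-probability bad-event constants. The main obstacle is the bookkeeping around the random sample count $T_i(t-1)$: because $T_i(t-1)$ is random, each per-round failure probability must be union-bounded over its $n$ possible integer values, and the choice $\delta=1/n^2$ (rather than the more typical $1/n$) is exactly what keeps the resulting aggregate failure contribution independent of $n$.
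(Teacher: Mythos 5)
Your plan follows essentially the same route as the paper's proof: the gap decomposition $\Regret_{n}=\sum_{i}\Delta_{i}\EE{T_{i}(n)}$, the threshold $u_{i}=\lceil 8\sigma^{2}\log(1/\delta)/\Delta_{i}^{2}\rceil$ (the paper's choice with its tuning constant $c=1/2$), and the same two failure events --- the optimal channel's index dropping below $\mu_{i^{*}}$, and the suboptimal channel's empirical minimum sitting $\Delta_{i}/2$ above $\min_{j}\mu_{ij}$ once $T_{i}(t-1)\geq u_{i}$. The paper packages the counting as a ``good event'' $G_{i}$ with the implication $G_{i}\Rightarrow T_{i}(n)\leq u_{i}$ rather than your Auer-style indicator sum, but the two devices are interchangeable. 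Your handling of the min-structure also matches the paper's in substance: the upward deviation of $\min_{j}\hat{\mu}_{ij}$ on a suboptimal channel is witnessed by the true minimizer $j^{\dagger}$ alone (so no factor of $p$ there), while the downward deviation on the optimal channel is witnessed by the (random) empirical minimizer and hence needs a union over the $p$ nodes.

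One point deserves attention: the constant. Your (correct) union bound over the $p$ nodes for failure event (A), summed over the $n$ possible sample counts and the $n$ rounds with $\delta=1/n^{2}$, contributes an additive term of order $p$ to $\EE{T_{i}(n)}$, not an absolute constant; carried through, your argument gives $\Regret_{n}\leq(p+2)\sum_{i}\Delta_{i}+\sum_{i:\Delta_{i}>0}16\sigma^{2}\log(n)/\Delta_{i}$ rather than the stated $3\sum_{i}\Delta_{i}+\cdots$, so the claim that the bad-event constants are ``absorbed'' into $3\sum_{i}\Delta_{i}$ does not hold for $p>1$. The paper obtains a $p$-free constant only by bounding $\Prob{\mu_{1}\geq\min_{j}\hat{\mu}_{1j}^{(s)}+c}$ by the \emph{product} $\prod_{j}\Prob{\mu_{1j}\geq\hat{\mu}_{1j}^{(s)}+c}$, i.e., treating the event as an intersection over nodes when it is in fact the union $\bigcup_{j}\{\hat{\mu}_{1j}^{(s)}+c\leq\mu_{1}\}$; that step is not a valid bound, and your union-bound version is the correct one. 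The discrepancy affects only the additive lower-order term, so your proof establishes the theorem with $3$ replaced by $p+2$, which leaves the leading $\log n$ term (and the claimed independence of that term from $p$) intact.
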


\noindent
Next we give a problem independent upper bound on regret.
\begin{thm}
    \label{thm: Theorem 2}
    Let \ref{alg:MaxMin-UCB} runs for $n$ rounds and $\delta = {1}/{n^{2}}$. Then its regret for any instance is upper bounded by 
	\begin{equation*}
	    \Regret_{n} \leq 8\sqrt{nm\sigma^{2}\log(n)} + 3\sum\limits_{i=1}^{m}\Delta_{i}.
	\end{equation*}
\end{thm}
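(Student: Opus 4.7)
The plan is to convert the problem-dependent bound of Theorem~1 into a worst-case bound by the standard threshold-splitting trick. I would introduce a free parameter $\Delta > 0$ and partition the suboptimal channels into those with small gap $\Delta_i \le \Delta$ and those with large gap $\Delta_i > \Delta$, then optimize $\Delta$ at the end.

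First I would recall the decomposition $\Regret_n = \sum_{i:\Delta_i > 0} \Delta_i\,\EE{T_i(n)}$ stated just before Theorem~1, and split the sum as
\begin{equation*}
\Regret_n \;=\; \sum_{i:0<\Delta_i \le \Delta} \Delta_i\,\EE{T_i(n)} \;+\; \sum_{i:\Delta_i > \Delta} \Delta_i\,\EE{T_i(n)}.
\end{equation*}
For the small-gap sum, I would upper bound each $\Delta_i$ by $\Delta$ and use the crude identity $\sum_i \EE{T_i(n)} = n$ to get a contribution of at most $n\Delta$. For the large-gap sum, I would invoke the per-arm bound implicit in Theorem~1, namely $\Delta_i\,\EE{T_i(n)} \le 3\Delta_i + 16\sigma^2 \log(n)/\Delta_i$, and further upper bound $1/\Delta_i$ by $1/\Delta$ for those indices, which gives a contribution of at most $3\sum_{i} \Delta_i + 16 m \sigma^2\log(n)/\Delta$.

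Combining the two pieces yields
\begin{equation*}
\Regret_n \;\le\; n\Delta \;+\; \frac{16\,m\sigma^2\log(n)}{\Delta} \;+\; 3\sum_{i=1}^m \Delta_i.
\end{equation*}
I would then optimize the first two (AM--GM-type) terms over $\Delta$, which is minimized at $\Delta^\star = 4\sqrt{m\sigma^2\log(n)/n}$ and gives a value $2\sqrt{16\,nm\sigma^2\log(n)} = 8\sqrt{nm\sigma^2\log(n)}$, yielding the stated bound.

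There is not really a hard step here: the only delicate point is that the choice of $\Delta^\star$ must be permissible (it is a purely analytical choice, not used by the algorithm, so no issue) and that Theorem~1 is applied per-arm rather than summed first, so that the factor $1/\Delta_i$ can be replaced by $1/\Delta$ before summing. Once that is done, the optimization over $\Delta$ is a one-line calculus step and the bound falls out cleanly.
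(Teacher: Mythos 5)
Your proposal is correct and follows essentially the same route as the paper: split the suboptimal channels at a threshold $\Delta$, bound the small-gap contribution by $n\Delta$ via $\sum_i \EE{T_i(n)} \le n$, apply the per-arm bound from Theorem~1 with $1/\Delta_i \le 1/\Delta$ on the large-gap side, and optimize $\Delta$ (the paper's choice $\Delta = \sqrt{16m\sigma^2\log(n)/n}$ is exactly your $\Delta^\star$). No substantive difference.
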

Note that both of the above bounds do not depend on the number of nodes $(p)$ as we observe samples from all the nodes for the selected arm. 

    \section{Experiments}
    \label{sec:experiment}

We empirically evaluate the performance of \ref{alg:MaxMin-UCB} for two sets of experiments. In the first experiment, we compare the behavior of regret on varying minimum sub-optimality gap, defined as  $\Delta_{min} = \min_{i \in [m] \setminus i^{*}} \Delta_{i}$. Whereas the regret behaviors on different numbers of channels and nodes are compared in the second experiment. We initially set our energy harvesting setup for channels $m=6$ and nodes $p=5$ where each $(i,j)$ channel-node pair has Bernoulli distribution with the mean energy harvested as $\mu_{ij}$ where $\mu_{ij}=0.5 - 0.05(i-j)$.
We measure the performance of our policy over a time horizon $T = 50000, \delta = 1/T$ and $\sigma = 1$. We repeat each experiment 1000 times and present the average regret with a 95\% confidence interval, shown as the vertical line on each curve.

\vspace{2mm}
\noindent
\textbf{Regret v/s Minimum sub-optimality gap:}\hspace{1mm} We investigate the impact of regret on different minimum sub-optimality gaps ($0.03, 0.04, 0.05, 0.06, 0.07$). As expected, the regret decreases as the minimum sub-optimality gap increases (see Fig. \ref{fig:deltaRegret}). 
\vspace{-2.5mm}
\begin{figure}[H]
	\centering
	\begin{subfigure}[b]{0.239\textwidth}
		\includegraphics[width=\linewidth]{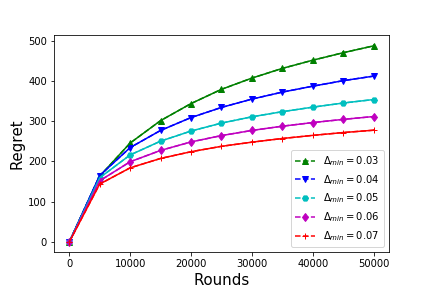}
    	\caption{Regret v/s $\Delta_{min}$}
    	\label{fig:deltaRegret}
	\end{subfigure}
	\begin{subfigure}[b]{0.239\textwidth}
		\includegraphics[width=\linewidth]{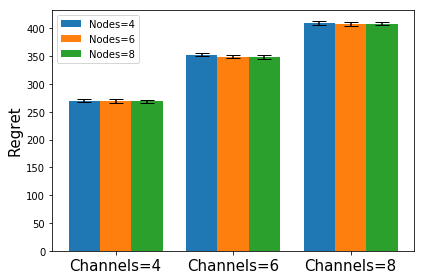}
		\caption{Regret v/s Channels and Nodes}
		\label{fig:RealLift}
	\end{subfigure}
	\caption{Empirical performance of \ref{alg:MaxMin-UCB}}
	\label{fig:regret 1}
\end{figure}
\vspace{-2mm}

\noindent
\textbf{Regret v/s Number of Channels and Nodes:}\hspace{2mm}  We run this experiment on the different number of channels $(4, 6, 8)$ and different number of nodes $(4, 6, 8)$. 
According to Fig. \ref{fig:RealLift}, the average regret is almost equal for different numbers of nodes for a fixed number of channels. But the regret increases as we increase the number of channels for a fixed number of nodes. As per our theoretical analysis, the simulated results also verify that regret is independent of the number of nodes.

    \section{Conclusion and Future Extensions}
    \label{sec:conclusion}
We considered radio frequency energy harvesting (RFEH) sensor network with a dedicated energy source. The source can transmit energy to the sensor node on one of the frequency bands, and the amount of energy harvested by the nodes is random with an unknown distribution. The goal is to ensure every node gets energy and remains active for a longer duration. We thus addressed learning and fairness issues in RFEH sensor networks. We cast the problem as Maximin Multi-Armed Bandits, where the goal is to identify a frequency band (arm) on which minimum mean energy harvested by each node is maximized. We developed an algorithm named Maximin UCB for this setup and showed that it enjoys the same regret guarantee as that of the well known UCB algorithm. 

In our study, we have ignored the current energy requirement of each node. In the future, it will be interesting to study the problem where the source transmits on a band based on the current battery levels of the nodes and the amount of information they have to transmit. As the amount of energy harvested by nodes is often non-linear function of power level, in addition to the selection of bands, the source may also decide on power levels to transmit energy so that the efficiency of harvested energy improves.

    \section*{Acknowledgment}
    \label{acknowledgement}
    Debamita Ghosh would like to thank Cognizant for funding under the "Umbrella Sponsorship Agreement" with IITB-Monash Research Academy, IIT Bombay. Manjesh K. Hanawal would like to thank the support from INSPIRE faculty fellowship from DST and Early Career Research (ECR) Award from SERB, Govt. of India.

    \section{Appendix}
    \label{sec:appendix}
\begin{lem}{\cite[Corollary 5.5]{Book19_Tor19Bandits}}
\label{lem:chernoffBound}
Let $X_{i} - \mu$ be the independent $\sigma-$sub-Gaussian random variables and $\hat{\mu} \doteq \frac{1}{n}\sum\limits_{t=1}^{n}X_{t}$. Then for any $\epsilon \geq 0$,
\begin{equation*}
    \Prob{|\hat{\mu} - \mu| \geq \epsilon} \leq 2\exp\left(-\frac{n\epsilon^{2}}{2\sigma^{2}}\right).
\end{equation*}
\end{lem}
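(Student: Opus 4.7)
The plan is to run the standard Cram\'er--Chernoff argument exploiting the defining moment generating function (MGF) bound for sub-Gaussian variables. First, I would use independence to tensorise the MGF: for each $\lambda\in\R$, the $\sigma$-sub-Gaussian property gives $\EE{\exp(\lambda(X_t-\mu))}\leq \exp(\lambda^2\sigma^2/2)$, and independence then yields
\[
\EE{\exp\!\left(\lambda\sum_{t=1}^n (X_t-\mu)\right)}=\prod_{t=1}^n \EE{\exp(\lambda(X_t-\mu))}\leq \exp\!\left(\frac{n\lambda^2\sigma^2}{2}\right).
\]
In other words, $n(\hat{\mu}-\mu)$ is $\sqrt{n}\sigma$-sub-Gaussian, equivalently $\hat{\mu}-\mu$ is $(\sigma/\sqrt{n})$-sub-Gaussian.

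Next, I would apply Markov's inequality to $\exp(\lambda n(\hat{\mu}-\mu))$ for $\lambda>0$ and $\epsilon\geq 0$:
\[
\Prob{\hat{\mu}-\mu\geq \epsilon} \leq \exp(-\lambda n\epsilon)\,\EE{\exp(\lambda n(\hat{\mu}-\mu))}\leq \exp\!\left(-\lambda n\epsilon + \tfrac{n\lambda^2\sigma^2}{2}\right).
\]
Optimising the right-hand side over $\lambda>0$ gives the minimiser $\lambda^{\ast}=\epsilon/\sigma^2$, producing the one-sided tail bound $\Prob{\hat{\mu}-\mu\geq \epsilon}\leq \exp(-n\epsilon^2/(2\sigma^2))$. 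Because $-(X_t-\mu)$ is also $\sigma$-sub-Gaussian, the identical argument gives the matching lower-tail bound; a union bound over the two one-sided events then yields the factor $2$ in the claimed inequality.

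The argument contains no genuinely hard step; the entire probabilistic content is packed into the sub-Gaussian MGF inequality, which is part of the definition. The only items to be careful about are restricting to $\lambda>0$ in the Markov step (so the exponential is monotone), verifying that independence is used exactly once (at tensorisation), and handling the trivial case $\epsilon=0$, where the bound $2$ holds vacuously. Since this result is quoted verbatim as \cite[Corollary 5.5]{Book19_Tor19Bandits}, I would cite the reference rather than reproduce the above derivation inside the paper.
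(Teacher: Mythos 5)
Your proposal is correct and is exactly the standard Cram\'er--Chernoff derivation underlying the cited result: tensorise the sub-Gaussian MGF via independence, apply Markov's inequality to the exponentiated sum, optimise at $\lambda^{\ast}=\epsilon/\sigma^{2}$, and union-bound the two one-sided tails to get the factor $2$. The paper offers no proof of its own here --- it simply invokes \cite[Corollary 5.5]{Book19_Tor19Bandits} --- so your closing remark that one would cite the reference rather than reproduce the argument is precisely what the authors do.
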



\begin{proof}[\bf Proof of Theorem \ref{thm: Theorem 1}]
    Let $\{X_{I_{t}j}^{(t)}: t \in [n], i \in [m], j \in [p]\}$ be the collection of independent random variables.
    We define $\hat{\mu}_{ij}^{(s)} = \frac{1}{s}\sum_{r=1}^{s}X_{ij}^{(r)}$ as the empirical mean of $(i,j)^{th}$ channel-node pair 
    based on first $s$ samples, and $\hat{\mu}_{ij}(t)$ as the empirical mean of the $(i,j)^{th}$ channel-node pair 
    after round $t$. After selecting channel $I_{t}$ at round $t$, we obtain the reward $X_{I_{t}j}^{(t)}$ for the $(I_t,j)^{th}$ channel-node pair.
    Following the standard arguments, expected regret can be decomposed as $\Regret_{n} = \sum_{i=1}^{m}\Delta_{i}\EE{T_{i}(n)}$ where $\Delta_{i} = \mu_{i^{*}} - \min_{j \in [p]}\mu_{ij}$. After selecting each channel once by the algorithm, channel $i$ can only be selected if its index is higher than an optimal channel, which happens if at least one of the following holds:
    \begin{enumerate}[label=(\roman*)]
        \item  The channel index $i$ is greater than the true mean of the particular optimal channel.
        \item The optimal channel index is smaller than its true mean.
    \end{enumerate}
    The index of the ideal channel is always assumed to be above its mean, as it is an upper bound on its mean with reasonably high probability. However, when the sub-optimal channel $i$  is played often enough, its exploration bonus is low. At the same time, the empirical estimate of its mean converges to the true value, with an upper limit on the expected total number of times when its index stays above the optimal channel's mean. Without loss of generality, let the first channel is optimal i.e. $\mu_{1} = \max_{i \in [m]}\min_{j \in [p]}\mu_{ij} = \min_{j \in [p]}\mu_{1j}$. We will bound $\EE{T_{i}(n)}$ for each sub-optimal channel $i$. 
    
    Let us define a good event $G_{i}$ such that the first channel's upper confidence limit never underestimates the true mean of the optimal channel i.e. $\mu_{1}$, and the upper confidence bound of the channel $i$ based on $u_{i}$ number of observations is below the optimum channel's payoff.
    \begin{align}
        \label{equ:goodEvent}
        G_{i} =& \left\{\mu_{1} < \min_{t \in [n]}\text{UCB}_{1}(t)\right\} \bigcap \nonumber\\ 
        &\left\{\min_{j \in [p]}\hat{\mu}_{ij}^{(u_{i})} + \sqrt{\frac{2\sigma^{2}\log(1/\delta)}{u_{i}}} < \mu_{1}\right\}
    \end{align}
    where $u_{i} \in [n]$ is a constant to be chosen later.
    
    To show: (i) If $G_{i}$ occurs then $T_{i}(n) \leq u_{i}$.
    
    \qquad \qquad (ii) $G_{i}^{c}$ occurs with low probability.
    \begin{align*}
        \text{Define: }I(G_i) & =\begin{cases}
        1      & \textrm{if $G_i$ occurs for channel $i$}  \\
        0      & \textrm{otherwise}.
         \end{cases}
    \end{align*}
    \begin{align}
    \label{equ:expT}
    	\EE{T_{i}(n)} &= \EE{I(G_{i})T_{i}(n)} + \EE{I(G_{i}^{c})T_{i}(n)} \nonumber \\
    &  \leq u_{i} + n\Prob{G_{i}^{c}} \qquad \text{since $T_{i}(n) \leq n$.}
    \end{align}
    
    \underline{Claim 1:} $G_{i}$ occurs $\Rightarrow T_{i}(n) \leq u_{i}$.
    
    Let $T_{i}(n) > u_{i}$ i.e. the channel $i$ played more than $u_{i}$ times over the $n$ rounds and $\exists$ $t \in [n]$ $\ni$ $T_{i}(t-1) = u_{i}$ and $I_{t} = i$.
    Following the definition of $\text{UCB}_{i}(t-1)$ and $G_{i}$,
    \begin{align*}
    	&\text{UCB}_{i}(t-1) = \min_{j \in [p]}\hat{\mu}_{ij}(t-1) + \sqrt{\frac{2\sigma^{2}\log(1/\delta)}{T_{i}(t-1)}} \nonumber \\
        &\qquad= \min_{j \in [p]}\hat{\mu}_{ij}^{(u_{i})} + \sqrt{\frac{2\sigma^{2}\log(1/\delta)}{u_{i}}}  \hspace{0.3cm}\text{[$T_{i}(t-1) = u_{i}$]}\\
        &\qquad < \mu_{1}  \hspace{0.3cm}< \text{UCB}_{1}(t-1).
    \end{align*}
    This contradicts that $I_t=i$ in round $i$. Hence $I_i(n)\leq u_i$.
    \underline{Claim 2:} We next upper bound $\Prob{G_{i}^{c}}$. From (\ref{equ:goodEvent}), 
    \begin{align*}
        G_{i}^{c}= &\left\{\mu_{1} \geq \min\limits_{t \in [n]}\text{UCB}_{1}(t)\right\} \bigcup \nonumber\\ &\left\{\min\limits_{j \in [p]}\hat{\mu}_{ij}^{(u_{i})} + \sqrt{\frac{2\sigma^{2}\log(1/\delta)}{u_{i}}} \geq \mu_{1}\right\}.
    \end{align*}
    
    \begin{itemize}
        \item We provide an upper bound of the first component.
        \begin{align}
    	  &\left\{\mu_{1} \geq \min\limits_{t \in [n]}\text{UCB}_{1}(t)\right\} \nonumber \\
    	   &\subset \left\{\mu_{1} \geq \min\limits_{s\in[n]} \left(\min\limits_{j \in [p]}\hat{\mu}_{1j}^{(s)} + \sqrt{\frac{2\sigma^{2}\log(1/\delta)}{s}}\right)\right\} \nonumber \\
           &= \bigcup\limits_{s\in [n]}\left\{\mu_{1} \geq \min\limits_{j \in [p]}\hat{\mu}_{1j}^{(s)} + \sqrt{\frac{2\sigma^{2}\log(1/\delta)}{s}}\right\}. \nonumber\\
        \intertext{Applying probability on both sides we get,} \nonumber
            &\Prob{\mu_{1} \geq \min\limits_{t \in [n]}\text{UCB}_{1}(t)} \nonumber \\
        &\leq \Prob{\bigcup\limits_{s\in [n]}\left(\mu_{1} \geq \min\limits_{j \in [p]}\hat{\mu}_{1j}^{(s)} + \sqrt{\frac{2\sigma^{2}\log(1/\delta)}{s}}\right)} \nonumber \\
        &\leq \sum\limits_{s=1}^{n}\Prob{\mu_{1} \geq \min\limits_{j \in [p]}\hat{\mu}_{1j}^{(s)} + \sqrt{\frac{2\sigma^{2}\log(1/\delta)}{s}}}. \nonumber \\
        \intertext{Since nodes are independent for each channel,}
        &\leq \sum\limits_{s=1}^{n}\prod\limits_{j=1}^{p}\Prob{\mu_{1j} \geq \hat{\mu}_{1j}^{(s)} + \sqrt{\frac{2\sigma^{2}\log(1/\delta)}{s}}} \nonumber \\
        &\leq \sum\limits_{s=1}^{n}\Prob{\mu_{1j_{s}^{*}} \geq \hat{\mu}_{1j_{s}^{*}}^{(s)} + \sqrt{\frac{2\sigma^{2}\log(1/\delta)}{s}}}. \nonumber
    \end{align}
    Applying Lemma \ref{lem:chernoffBound} to this we get,
    \begin{align}
    \label{equ:firstcomponent}
    	\Prob{\mu_{1} \geq \min\limits_{t \in [n]}\text{UCB}_{1}(t)} \leq n\delta.
    \end{align}
    
     \item We provide an upper bound of the second component.
     
     We choose $u_{i}$ sufficiently large such that,
    \begin{equation}
        \Delta_{i}- \sqrt{\frac{2\sigma^{2}\log(1/\delta)}{u_{i}}} \geq c\Delta_{i}, \label{equ:delta}
    \end{equation}
     for some $c \in (0,1)$, where $c$ is chosen later.
      
     \noindent
     Since, $\mu_{1} = \min\limits_{j \in [p]}\mu_{ij} + \Delta_{i}$
    \begin{align}
    	&\hspace{-1mm}\Prob{\min\limits_{j \in [p]}\hat{\mu}_{ij}^{(u_{i})} + \sqrt{\frac{2\sigma^{2}\log(1/\delta)}{u_{i}}} \geq \mu_{1}} \nonumber \\
    	&= \Prob{\min\limits_{j \in [p]}\hat{\mu}_{ij}^{(u_{i})} - \min\limits_{j \in [p]}\mu_{ij} \geq \Delta_{i} - \sqrt{\frac{2\sigma^{2}\log(1/\delta)}{u_{i}}}} \nonumber\\
    	&\leq \Prob{\min\limits_{j \in [p]}\hat{\mu}_{ij}^{(u_{i})} - \min\limits_{j \in [p]}\mu_{ij} \geq c\Delta_{i}}. \nonumber\\
    	\intertext{Since nodes are independent for each channel,}
    	& = \prod\limits_{j \in [p]}\Prob{\hat{\mu}_{ij}^{(u_{i})} \geq \min\limits_{j \in [p]}\mu_{ij} + c\Delta_{i}} \nonumber \\
    	&\leq \Prob{\hat{\mu}_{ij_{u_{i}}^{*}}^{(u_{i})} \geq \mu_{ij^{*}} + c\Delta_{i}} \nonumber\\
    	&\leq \exp\left(-\frac{u_{i}c^{2}\Delta_{i}^{2}}{2\sigma^{2}}\right). \hspace{2mm} (\mbox{from Lemma \ref{lem:chernoffBound}}) \label{equ:secondcomponent} 
    \end{align}
     Adding Eq. (\ref{equ:firstcomponent}) and Eq. (\ref{equ:secondcomponent}),
     \begin{equation*}
         \Prob{G_{i}^{c}} \leq n\delta + \exp\left(-\frac{u_{i}c^{2}\Delta_{i}^{2}}{2\sigma^{2}}\right).
     \end{equation*}
     Substituting this in Eq. (\ref{equ:expT}), we have
     \begin{equation}
         \EE{T_{i}(n)} \leq u_{i} + n\left\{n\delta + \exp\left(-\frac{u_{i}c^{2}\Delta_{i}^{2}}{2\sigma^{2}}\right)\right\}. \label{equ:bounexpT}
     \end{equation}

    \item We choose $u_{i} \in [n]$ as $u_{i} = \left\lceil \frac{2\sigma^{2}\log(1/\delta)}{(1-c)^{2}\Delta_{i}^{2}}\right\rceil$ such that it satisfies Eq. (\ref{equ:delta}). This choice of $u_{i}$ can be larger than $n$ which makes the Eq. (\ref{equ:bounexpT}) trivially true as $T_{i}(n) \leq n$.
     \vspace{2mm}
     
     \item We find that the second term has polynomial dependence on $\delta$ (which depends on $n$) unless $\frac{c^{2}}{(1-c)^{2}} \geq 1$. However, if $c$ is chosen close to $1$, the first term will blow up. So we consider an arbitrary choice of $c \in (0,1)$ as $c = 1/2$.
     
     \vspace{2mm}
      Applying the choices of $u_{i}$ and $c$ in Eq. (\ref{equ:bounexpT}),
      \begin{align}
          \EE{T_{i}(n)} &\leq \left\lceil \frac{8\sigma^{2}\log(1/\delta)}{\Delta_{i}^{2}} \right\rceil + n(n\delta + \delta). \label{equ:boundexpTdelta}
        \intertext{Setting $\delta = 1/n^{2}$ in Eq. (\ref{equ:boundexpTdelta})}
        \EE{T_{i}(n)} &\leq 3 + \frac{16\sigma^{2}\log(n)}{\Delta_{i}^{2}},
        \hspace{0.2cm}\text{where $n \geq 1$}.
        \intertext{Therefore,} 
        \Regret_{n} &\leq 3\sum\limits_{i=1}^{m}\Delta_{i} + \sum\limits_{i: \Delta_{i} > 0} \frac{16\sigma^{2}\log(n)}{\Delta_{i}}. \qedhere
      \end{align}
    \end{itemize}
\end{proof}

\begin{proof}[\bf Proof of Theorem \ref{thm: Theorem 2}] 
    Let $\Delta > 0$ be some value to be tuned subsequently and from Theorem $1$ we can bound, 
    \begin{equation}
        \label{equ:finalexpT}
        \EE{T_{i}(n)} \leq 3 + \frac{16\sigma^{2}\log(n)}{\Delta_{i}^{2}}.
    \end{equation}
    Using regret decomposition, we have
    \begin{equation*}
        \Regret_{n} 
        = \sum\limits_{i: \Delta_{i} < \Delta}\Delta_{i}\EE{T_{i}(n)} + \sum\limits_{i: \Delta_{i} \geq \Delta}\Delta_{i}\EE{T_{i}(n)}.
    \end{equation*}
    Applying $\sum\limits_{i:\Delta_{i} < \Delta}T_{i}(n)\leq n$ and Eq. (\ref{equ:finalexpT})
    \begin{equation*}
        \implies \Regret_{n} \leq n\Delta + \sum\limits_{i: \Delta_{i} \geq \Delta}\left(3\Delta_{i} + \frac{16\sigma^{2}\log(n)}{\Delta_{i}}\right).
    \end{equation*}
    Choosing $\Delta = \sqrt{\frac{16m\sigma^{2}\log(n)}{n}}$, we get
    \begin{equation*}
        \Regret_{n} \leq 8\sqrt{nm\sigma^{2}\log(n)} + 3\sum\limits_{i=1}^{m}\Delta_{i}.  \qedhere
    \end{equation*}
\end{proof}

    \bibliographystyle{IEEEtran}
    \bibliography{ref}

\begin{thebibliography}{10}
\providecommand{\url}[1]{#1}
\csname url@samestyle\endcsname
\providecommand{\newblock}{\relax}
\providecommand{\bibinfo}[2]{#2}
\providecommand{\BIBentrySTDinterwordspacing}{\spaceskip=0pt\relax}
\providecommand{\BIBentryALTinterwordstretchfactor}{4}
\providecommand{\BIBentryALTinterwordspacing}{\spaceskip=\fontdimen2\font plus
\BIBentryALTinterwordstretchfactor\fontdimen3\font minus
  \fontdimen4\font\relax}
\providecommand{\BIBforeignlanguage}[2]{{%
\expandafter\ifx\csname l@#1\endcsname\relax
\typeout{** WARNING: IEEEtran.bst: No hyphenation pattern has been}%
\typeout{** loaded for the language `#1'. Using the pattern for}%
\typeout{** the default language instead.}%
\else
\language=\csname l@#1\endcsname
\fi
#2}}
\providecommand{\BIBdecl}{\relax}
\BIBdecl

\bibitem{AWPL2014_li2014frequency}
P.~K. Li, Z.~H. Shao, Q.~Wang, and Y.~J. Cheng, ``{Frequency-and
  pattern-reconfigurable antenna for multistandard wireless applications},''
  \emph{IEEE antennas and wireless propagation letters}, vol.~14, pp. 333--336,
  2014.

\bibitem{CM2015_mishra_smart}
D.~Mishra, S.~De, S.~Jana, S.~Basagni, K.~Chowdhury, and W.~Heinzelman,
  ``{Smart RF energy harvesting communications: Challenges and
  opportunities},'' \emph{IEEE Communications Magazine}, vol.~53, no.~4, pp.
  70--78, 2015.

\bibitem{Book19_Tor19Bandits}
T.~Lattimore and C.~Szepesv{\'a}ri, \emph{Bandit Algorithms}.\hskip 1em plus
  0.5em minus 0.4em\relax Cambridge University Press (to be printed soon),
  August 2020.

\bibitem{JSAC2015_energyharvesting_UlukusSennurYener}
S.~Ulukus, A.~Yener, E.~Erkip, O.~Simeone, M.~Zorzi, P.~Grover, and K.~Huang,
  ``{Energy harvesting wireless communications: A review of recent advances},''
  \emph{IEEE Journal on Selected Areas in Communications}, vol.~33, no.~3, pp.
  360--381, 2015.

\bibitem{CST2014_wirelessnetwork_LuXiaoPingDusit}
X.~Lu, P.~Wang, D.~Niyato, D.~I. Kim, and Z.~Han, ``{Wireless networks with RF
  energy harvesting: A contemporary survey},'' \emph{IEEE Communications
  Surveys \& Tutorials}, vol.~17, no.~2, pp. 757--789, 2014.

\bibitem{MM2014_harvesting_ValentaChristGreg}
C.~R. Valenta and G.~D. Durgin, ``{Harvesting wireless power: Survey of
  energy-harvester conversion efficiency in far-field, wireless power transfer
  systems},'' \emph{IEEE Microwave Magazine}, vol.~15, no.~4, pp. 108--120,
  2014.

\bibitem{TIT2017_online_sakulkar}
P.~Sakulkar and B.~Krishnamachari, ``{Online learning schemes for power
  allocation in energy harvesting communications},'' \emph{IEEE Transactions on
  Information Theory}, vol.~64, no.~6, pp. 4610--4628, 2017.

\bibitem{WN2018_darak2018distributed}
S.~J. Darak, C.~Moy, and J.~Palicot, ``{Distributed decision making policy for
  frequency band selection boosting RF energy harvesting rate in wireless
  sensor nodes},'' \emph{Wireless Networks}, vol.~24, no.~8, pp. 3189--3203,
  2018.

\bibitem{IA2017_maghsudi_distributed}
S.~Maghsudi and E.~Hossain, ``{Distributed user association in energy
  harvesting dense small cell networks: A mean-field multi-armed bandit
  approach},'' \emph{IEEE Access}, vol.~5, pp. 3513--3523, 2017.

\bibitem{AA-MAS2012_tran_long}
L.~Tran-Thanh, A.~Rogers, and N.~R. Jennings, ``{Long-term information
  collection with energy harvesting wireless sensors: a multi-armed bandit
  based approach},'' \emph{Autonomous Agents and Multi-Agent Systems}, vol.~25,
  no.~2, pp. 352--394, 2012.

\bibitem{INFOCOM2019_DistributedLearning_TibrewalPatchalaHanawal}
H.~Tibrewal, S.~Patchala, M.~Hanawal, and S.~Darak, ``{Distributed Learning and
  Optimal Assignment in Multiplayer Heterogeneous Networks},'' in \emph{IEEE
  INFOCOM}, 2019.

\bibitem{WiOpt2019_DistributedAlgorithms_VermaHanawalVaze}
A.~Verma, M.~Hanawal, and R.~Vaze, ``{Distributed Algorithms for Efficient
  Learning and Coordination in Ad Hoc Networks},'' in \emph{IEEE WiOpt}, 2019.

\bibitem{TWC2013_ju2013throughput}
H.~Ju and R.~Zhang, ``{Throughput maximization in wireless powered
  communication networks},'' \emph{IEEE Transactions on Wireless
  Communications}, vol.~13, no.~1, pp. 418--428, 2013.

\bibitem{GCC2014_ju2014user}
H.~Ju and R.~Zhang, ``{User cooperation in wireless powered communication
  networks},'' in \emph{2014 IEEE Global Communications Conference}.\hskip 1em
  plus 0.5em minus 0.4em\relax IEEE, 2014, pp. 1430--1435.

\bibitem{JSAC2015_yang2015throughput}
G.~Yang, C.~K. Ho, R.~Zhang, and Y.~L. Guan, ``{Throughput optimization for
  massive MIMO systems powered by wireless energy transfer},'' \emph{IEEE
  Journal on Selected Areas in Communications}, vol.~33, no.~8, pp. 1640--1650,
  2015.

\bibitem{ML2002_FiniteTimeAnlaysis_Auer}
P.~Auer, N.~Cesa-Bianchi, and P.~Fischer, ``{Finite-time Analysis of the
  Multiarmed Bandit Problem},'' \emph{Machine Learning}, vol.~47, no. 2--3, pp.
  235 --256, 2002.

\end{thebibliography}

\end{document}